%% file: main.tex
\documentclass[10pt,twocolumn,letterpaper]{article}

\usepackage{cvpr}              % To produce the CAMERA-READY version
\input{preamble}
\definecolor{cvprblue}{rgb}{0.21,0.49,0.74}
\definecolor{darkred}{rgb}{0.55, 0.0, 0.0}
\definecolor{darkgreen}{RGB}{1,50,32}

\usepackage{marvosym}
\usepackage{adjustbox}
\usepackage{graphicx}
\usepackage{amssymb}
\usepackage{pifont}
\usepackage{tabularx}
\usepackage[pagebackref,breaklinks,colorlinks,allcolors=cvprblue]{hyperref}
\usepackage{booktabs} % professional-quality horizontal rules
\usepackage{multirow} % for multi-row headers (used for "Method")
\usepackage{array}  % improved column formatting
\usepackage{makecell}
\usepackage{upquote} % for straight quotation marks `` instead of curved ones.
\usepackage{booktabs} % For professional tables (toprule, midrule, bottomrule)
\usepackage[table]{xcolor} % For colors (defining darkred/darkgreen)

\definecolor{darkred}{rgb}{0.6, 0, 0}
\definecolor{darkgreen}{rgb}{0, 0.6, 0}
\usepackage{url}
\usepackage[utf8]{inputenc} % allow utf-8 input
\usepackage[T1]{fontenc}    % use 8-bit T1 fonts
\usepackage{amsmath}
\usepackage{amsfonts}       % blackboard math symbols
\usepackage{nicefrac}       % compact symbols for 1/2, etc.

\usepackage{subfiles}
\usepackage{microtype}      % microtypography
\usepackage{wrapfig}
\usepackage{arydshln}
\usepackage{placeins}
\usepackage{svg}

\usepackage{amsthm}
\usepackage{enumitem}
\usepackage{comment}
\usepackage[font=small,labelfont=bf]{caption}
\usepackage{subcaption}
\newtheorem{remark}{Remark}[section]

\newcommand{\EE}{\mathbb{E}}
\appendix
\setlist[itemize]{leftmargin=0pt, labelsep=1em, itemsep=0pt, parsep=0pt}
\newtheorem{assumption}{Assumption}[section]
\newtheorem{proposition}{Proposition}[section]
\usepackage{xr}
\usepackage{siunitx}

\usepackage[table]{xcolor}
\usepackage{colortbl}          % extra safety
\definecolor{lightblue1}{RGB}{222,235,247} % lightest blue
\definecolor{lightblue2}{RGB}{189,215,238} % medium blue
\definecolor{lightblue3}{RGB}{158,202,225} % slightly darker blue

\newcolumntype{G}{>{\color{gray}}c}

\definecolor{cvprblue}{rgb}{0.21,0.49,0.74}
\usepackage[pagebackref,breaklinks,colorlinks,allcolors=cvprblue]{hyperref}

\def\confName{CVPR}
\def\confYear{2026}

\title{Training a Student Expert via Semi-Supervised Foundation Model Distillation}
\author{%
  Pardis Taghavi, Tian Liu, Renjie Li, Reza Langari, Zhengzhong Tu\thanks{Corresponding author}\\
  Texas A\&M University\\
  \texttt{\{ptgh,ltmask,renjie,rlangari,tzz\}@tamu.edu} \\
}

\begin{document}
\maketitle
\input{sec/0_abstract}    
\input{sec/1_intro}

\input{sec/2_related_work}
\input{sec/3_method}
\input{sec/4_experiments}

\input{sec/5_conclusion}

\input{sec/X_suppl}

{
    \small
    \bibliographystyle{ieeenat_fullname}
    \bibliography{main}
}

% WARNING: do not forget to delete the supplementary pages from your submission 
%\input{sec/X_suppl}

\end{document}

%% file: sec/0_abstract.tex
\begin{abstract}

Foundation models deliver strong perception but are often too computationally heavy to deploy, and adapting them typically requires costly annotations. We introduce a semi-supervised knowledge distillation (SSKD) framework that compresses pre-trained vision foundation models (VFMs) into compact experts using limited labeled and abundant unlabeled data, and instantiate it for instance segmentation where per-pixel labels are particularly expensive. The framework unfolds in three stages: (1) domain adaptation of the VFM(s) via self-training with contrastive calibration, (2) knowledge transfer through a unified multi-objective loss, and (3) student refinement to mitigate residual pseudo-label bias. Central to our approach is an instance-aware pixel-wise contrastive loss that fuses mask and class scores to extract informative negatives and enforce clear inter-instance margins. By maintaining this contrastive signal across both adaptation and distillation, we align teacher and student embeddings and more effectively leverage unlabeled images. On Cityscapes and ADE20K, our $\approx 11\times$ smaller student improves over its zero-shot VFM teacher(s) by +11.9 and +8.6 AP, surpasses adapted teacher(s) by +3.4 and +1.5 AP, and outperforms state-of-the-art SSKD methods on benchmarks.

\end{abstract}

%% file: sec/1_intro.tex
\section{Introduction}
\label{sec:intro}

Vision foundation models (VFMs)~\cite{oquab2023dinov2, liu2024grounding, yuan2025sa2va, kirillov2023segany} have substantially expanded the capabilities of computer vision systems, achieving strong performance across diverse perception benchmarks~\cite{awais2025foundation}. However, their scale often makes deployment costly or impractical in resource constrained settings, and their generic training objectives can yield suboptimal performance on specialized downstream tasks.
Instance segmentation amplifies challenges: pixel-level mask annotation is expensive, and state-of-the-art instance segmentation models can require substantial training compute~\cite{Cordts2016Cityscapes, he2017mask}.

\noindent\textbf{Motivation.}
Despite remarkable progress, foundation models still face important challenges in task- and domain-specific instance segmentation due to two recurring issues:
(1) heavy computational overhead at inference time, which limits real world deployment under strict latency, memory, or energy budgets~\cite{xu2024survey, zhuang2025argus}; and
(2) limited specialization, as models optimized to transfer broadly can underperform on domains-specific tasks~\cite{sony2025foundation, bommasani2021opportunities, lu2025general}.
%This challenge is particularly prominent in outdoor environments such as autonomous driving and in indoor settings such as robotic perception~\cite{firoozi2023foundation, yan2024forging}.
This need for specialized, efficient models is particularly evident in outdoor applications such as autonomous driving and in indoor settings such as robotic perception~\cite{firoozi2023foundation, yan2024forging, tao2026navidrivevlm}.
%We focus on two representative application domains: outdoor environments such as autonomous driving and indoor settings such as robotic perception~\cite{firoozi2023foundation, yan2024forging}.

Semi-supervised knowledge distillation (SSKD) offers a practical approach for instance segmentation, compressing a large teacher into an efficient student while leveraging limited labeled data together with abundant unlabeled images.
However, existing approaches either treat VFMs as fixed feature extractors~\cite{Jang2025vl2, lee2025customkd}, focus on coarser semantic tasks~\cite{liang2025task}, or, when targeting instance segmentation, fail to fully exploit the structure of unlabeled data to refine dense mask predictions~\cite{yoon2025s, lin2025pseudo}.
As a result, adjacent instances remain poorly separated and performance degrades in low-label regimes.

We address these limitations with a stage-wise training paradigm that (i) adapts the VFM(s) via self-training to improve pseudo-label alignment, and (ii) introduces an instance-aware pixel-wise contrastive loss that uses unlabeled data to enforce clear inter-instance margins.
Crucially, this self-supervised contrastive signal is maintained across both teacher adaptation and student distillation, improving mask separation and yielding stronger performance under limited supervision.

\noindent\textbf{Status quo.}
Knowledge distillation has evolved from task-agnostic compression~\cite{hinton2015distilling,chen2020big} to adapting VFMs for downstream tasks.
For classification and semantic segmentation, Vemulapalli et al.~\cite{Vemulapalli2024KD} distill a VFM by matching its outputs on an unlabeled transfer set, and SAM-CLIP~\cite{wang2024sam} fuses CLIP and SAM.
However, neither targets per-pixel instance masks nor leverages dense self-supervision from the unlabeled pool for mask refinement.
Pure semi-supervised instance segmentation methods~\cite{hu2023pseudo, berrada2024guided} often train teachers from scratch, increasing compute cost, and can still produce noisy masks under scarce labels,  leaving the potential of modern foundation models underexploited.
%To our knowledge, no prior work unifies VFM adaptation, unlabeled data-driven pixel-wise refinement, and extreme student compression for instance segmentation.

\textbf{Contributions.}
We summarize our main contributions as follows:
\begin{itemize}[nosep,leftmargin=6mm]
  \item We introduce an \emph{instance-aware pixel-wise contrastive loss} that combines mask and class predictions to identify informative negatives and enforce stronger inter-instance separation in dense prediction settings.

  \item We propose a three-stage semi-supervised foundation model distillation framework for training compact student experts: (i) domain adaptation of the foundation teacher via self-training with contrastive calibration, (ii) distillation into a compact student using a unified objective over labeled and unlabeled data, and (iii) student refinement to mitigate residual pseudo-label bias.

  \item We validate the proposed framework on Cityscapes and ADE20K. Although the student is approximately $11\times$ smaller than the teacher, it improves over the zero-shot VFM teacher by +11.9 AP on Cityscapes and +8.6 AP on ADE20K, and exceeds the adapted teacher by +3.4 AP and +1.5 AP, respectively. Across both benchmarks, it also outperforms prior semi-supervised knowledge distillation baselines.
\end{itemize}

%% file: sec/2_related_work.tex
\section{Related work}
\label{sec:Related Work}

\noindent\textbf{Vision Foundation Models.}
Vision foundation models (VFMs)~\cite{oquab2023dinov2, liu2024grounding, ravi2024sam, yang2024depth, bochkovskii2024depth} have substantially advanced computer vision through large scale pre-training and strong transferability across diverse tasks. Recent efforts further extend their capabilities by combining complementary VFMs~\cite{ren2024grounded,yuan2025sa2va}. Despite their strong open-set recognition and transfer performance, these models remain computationally demanding, which limits deployment in resource-constrained settings. To address this challenge, recent works explore compressing or merging VFMs via distillation. For example, Wang et al.~\cite{wang2024sam} unify SAM and CLIP through multi-task learning, while Zhang et al.~\cite{zhang2025accessing} distill CLIP and DINOv2 into a compact model using moderate-scale data distillation. We build on this line of work by studying how VFMs can supervise compact student experts for instance segmentation under limited labeled data.

\noindent\textbf{Knowledge Distillation in Vision.}
Knowledge distillation (KD) addresses transferring knowledge from high-capacity teachers to lightweight students for efficient deployment. Early methods distill softened logits or intermediate representations in a task-agnostic manner~\cite{hinton2015distilling}, while later feature-based approaches capture richer spatial and channel-wise structure~\cite{rajasegaran2020self, shu2021channel}. More recent work studies distillation from large pre-trained or foundation models~\cite{sun2023dime, yang2024clip}, including multi-teacher settings that combine complementary expertise~\cite{jiang2024mtkd, yang2025multi}. Vemulapalli et al.~\cite{Vemulapalli2024KD} adapt a VFM to the target task and distill it on an unlabeled transfer set for classification and semantic segmentation. We focus on instance segmentation, where compact deployment is desirable but mask-level supervision is expensive, and where unlabeled data must be exploited at finer spatial granularity.

A complementary line of work studies contrastive knowledge distillation, where teacher and student representations are aligned through contrastive objectives~\cite{tian2019contrastive, fang2021seed, zhu2021complementary}. Extensions to dense prediction have explored ROI- or pixel-level contrastive distillation for object detection and semantic segmentation~\cite{yao2021g, yang2022cross, fan2023augmentation, huang2023pixel}. In contrast, our method does not rely on explicit teacher--student contrastive matching; instead, it uses an instance-aware pixel-wise contrastive objective as a self-supervised signal within a unified semi-supervised distillation pipeline for instance segmentation.

\noindent\textbf{Semi-Supervised Learning.}
Self‐training (or pseudo‐labeling) has become a foundational paradigm in semi‐supervised learning (SSL), where a model leverages its own predictions with high confidence and iteratively refines itself~\cite{xie2020self}. This approach has proven effective across vision tasks, improving image classification performance~\cite{xie2020self} and boosting object detection accuracy when annotation budgets are tight~\cite{liu2021unbiased}. To counteract error accumulation from noisy pseudo‐labels~\cite{tarvainen2017mean} use exponential moving average of label predictions, or~\cite{cascante2021curriculum} employ curriculum labeling schemes that gradually incorporate harder examples. More recent work applies pseudo-labeling for large pre-trained models through targeted finetuning and adaptive pseudo selection strategies~\cite{gan2024erasing}. While many SSL methods focus on classification or detection, several have extended this method to dense prediction tasks~\cite{chen2021semi, yang2023revisiting}.

We study self‐training with self‐supervised contrastive learning and task‐specific adaptation. Global contrastive frameworks such as SimCLR~\cite{chen2020simple}, MoCo~\cite{chen2021empirical}, and their detection extensions~\cite{xie2021detco} established the value of large-scale visual discrimination learning. Further per-pixel contrastive approaches~\cite{wang2021dense, xie2021propagate, zhong2021pixel, wang2022contrastmask, alonso2021semi} have shown promise in retaining spatial sensitivity though they yet conflate pixels from different instances of the same class. We extend these advances by synergizing self-training and self-supervised contrastive learning, and introduce a novel instance-aware negative sampling strategy designed specifically for the demands of instance segmentation.

\begin{figure*}[t]
\begin{center}
   \includegraphics[width=1\linewidth]{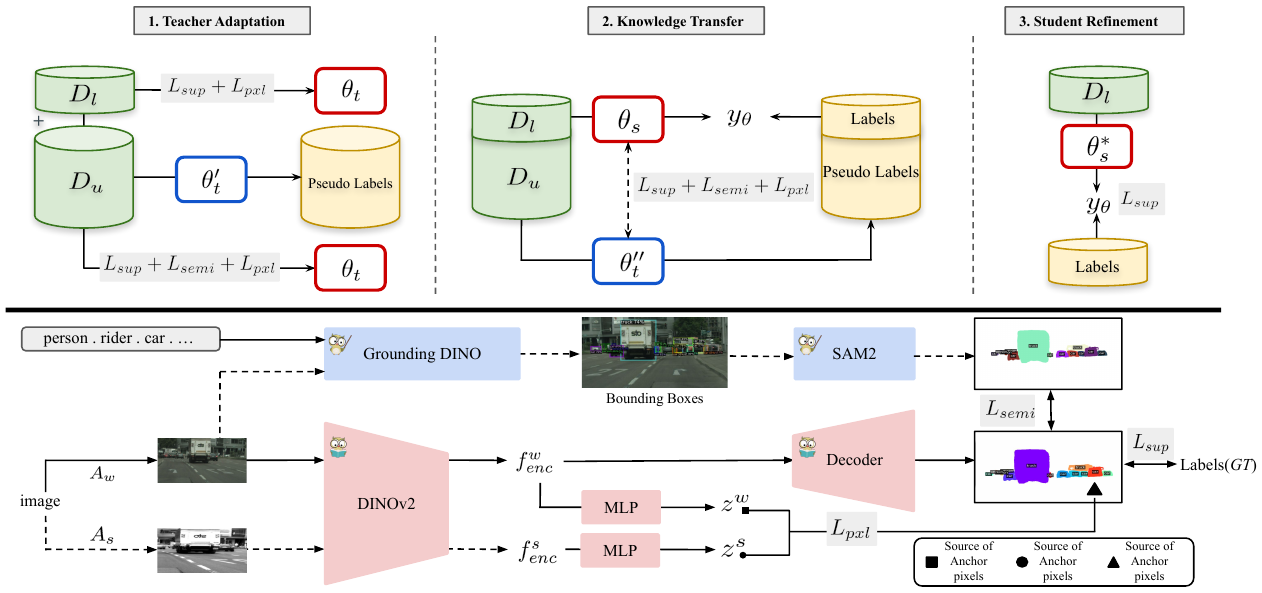}
   \end{center}
\caption{%
  \textbf{Framework overview.} 
  \textbf{Top:} Three-stage pipeline: 
  (1) adapt a pre-trained VFM teacher to the target domain via self-training with pixel-level contrastive calibration; 
  (2) distill knowledge into a compact student using instance-aware contrastive sampling; 
  (3) fine-tune the student on labeled data to correct residual pseudo-label bias.  
  \textbf{Bottom:} Detailed view of stage (2): fused mask and class score maps produce anchor pixels, sampled across weak/strong views to form positive/negative pairs; an MLP projects features for the contrastive loss. Dashed arrows denote no gradient flow; red modules are trainable, blue are frozen.
}

\label{fig:setup}
\end{figure*}

%% file: sec/3_method.tex
\section{Method}
\subsection{Overview}
In semi‐supervised settings, we are given a small labeled set and a substantially larger unlabeled pool:
\[
  \mathcal{D}^l = \bigl\{(x_i^l, y_i^l)\bigr\}_{i=1}^{N_l}
  \quad\text{and}\quad
  \mathcal{D}^u = \bigl\{x_i^u\bigr\}_{i=1}^{N_u},
  \quad N_u \gg N_l,
\]
where each annotation \(y_i^l\) consists of binary masks and class labels for every instance. Our goal is to transfer knowledge from a large, pretrained VFM into a compact student \(f_{\theta_s}\) that achieves comparable or better accuracy with substantially lower computational cost.
 We propose a three-stage SSKD pipeline that hinges on two key ideas: \ding{182} \underline{\emph{Contrastive Calibration.}}  We fine-tune a large VFM teacher via self-training, but rather than simple pseudo-labels we also inject a pixel-wise contrastive head to sharpen mask boundaries. 
\ding{183} \underline{\emph{Debiased, Instance-Aware Sampling.}}  During both adaptation and distillation, we mine hard negatives via a joint mask-/class-probability embedding, focusing repulsion on informative inter-instance pairs tailored for instance segmentation.
These two ideas are then realized in three concise stages (see Fig.~\ref{fig:setup}):
\begin{enumerate}[leftmargin=6mm,nosep]
  \item \textbf{Teacher Adaptation.}  We adapt the pretrained VFM teacher using labeled data, pseudo-labels on unlabeled data, and pixel-wise contrastive regularization.  
  \item \textbf{Knowledge Transfer.}  We freeze the adapted teacher and distill its knowledge into a lightweight student using supervised, pseudo-label, and contrastive objectives.  
  \item \textbf{Student Refinement.}  We fine-tune the student on labeled data only to reduce residual bias introduced by pseudo-labels.
\end{enumerate}
 
Sec.~\ref{sec:contrastive-loss} formalizes our instance-aware pixel-wise contrastive loss, which is used in both Teacher Adaptation and Knowledge Transfer to enforce intra-instance cohesion and inter-instance separation; Sec.~\ref{sec:pipeline} then details the three stages of the training pipeline.
%%%%%%%%%%%%%%%%%%%%%%%%%
\subsection{Contrastive Loss}
\label{sec:contrastive-loss}
Standard supervised and pseudo-label losses enforce correct mask predictions, but they do not explicitly model pixel-level feature relationships. As a result, they underutilize unlabeled data and can amplify pseudo-label noise. To better exploit both labeled and unlabeled images, we add a pixel-wise contrastive loss that improves feature discrimination and regularizes training against noisy supervision.

Let \(z^{\rm weak}, z^{\rm strong} \in \mathbb{R}^{B \times N \times D}\) be \(\ell_2\)-normalized embeddings extracted from two augmented views of each image, where \(B\) is the batch size, \(N = h \times w\) is the number of spatial locations at the feature resolution, and \(D\) is the embedding dimension. For image index \(b \in \{1, \dots, B\}\) and spatial location \(p \in \{1, \dots, N\}\), the corresponding embedding vector is denoted by \(z_{b,p} \in \mathbb{R}^D\).

For each anchor pixel, the positive pair is formed by matching the embeddings at the same spatial location in the weak and strong views. The positive similarity is defined as
\[
s_{b,p}^{+} = \langle z^{\rm weak}_{b,p}, z^{\rm strong}_{b,p}\rangle / T,
\]
where \(T\) is a temperature parameter.

Negatives are sampled using our instance-aware sampler (Sec.~\ref{par:debiased-sampling}), which returns indices \(\{(b', q_r)\}_{r=1}^R\). The corresponding negative similarities are
\[
s_{b,p,r}^{-}
=
\langle z^{\rm weak}_{b,p}, z^{\rm strong}_{b',q_r}\rangle / T,
\qquad r=1,\dots,R.
\]

The pixel-wise contrastive loss is then defined as the standard NT-Xent objective over all anchor pixels:
\[
\mathcal{L}_{\rm pxl}
=
-\frac{1}{BN}
\sum_{b=1}^{B}\sum_{p=1}^{N}
\log
\frac{\exp(s_{b,p}^{+})}
{\exp(s_{b,p}^{+}) + \sum_{r=1}^{R}\exp(s_{b,p,r}^{-})}.
\]
%%%%%%%%
\noindent\textbf{Debiased Pixel-Level Negative Sampling.}
\label{par:debiased-sampling}

To efficiently sample informative negatives for pixel-wise contrastive learning, we construct a per-pixel sampling distribution by fusing mask and class predictions. The goal is to favor pixels that are more likely to belong to different instances, without incurring quadratic pairwise comparisons.
Let
\[
M \in \mathbb{R}^{B \times K \times H \times W},
\qquad
L \in \mathbb{R}^{B \times K \times (C+1)},
\]
denote the model's mask logits and class logits, respectively, where \(K\) is the number of candidate instances and \(C\) is the number of foreground classes. We first resize \(M\) to the feature resolution \((h \times w)\), and then convert the mask and class logits into probability distributions \(P_m\) and \(P_c\) by applying softmax over the instance and class dimensions, respectively.
For each pixel \((b,p)\), we compute the expected class distribution
\begin{align}
F_c[b,p,c] &= \sum_{k=1}^K P_m[b,k,p]\,P_c[b,k,c]. \label{eq:f_c}
\end{align}
The distribution \(F_c\) captures semantic class information at each pixel, but it may blur instance identity when multiple instances share the same class. To preserve both instance-level and class-level cues, we form a joint pseudo-probability embedding by concatenating the mask distribution and the expected class distribution:
\begin{align}
y[b,p] &=
\begin{bmatrix}
P_m[b,1\!:\!K,p]\\[4pt]
F_c[b,p,1\!:\!C\!+\!1]
\end{bmatrix}
\in \mathbb{R}^{K + (C+1)}. \label{eq:y_bp}
\end{align}

Let \(\tilde y[b,p]\) denote the \(\ell_2\)-normalized version of \(y[b,p]\). We define the dissimilarity score between two pixels \((b,p)\neq(b',q)\) as
\[
s^{\mathrm{deb}}\bigl((b,p),(b',q)\bigr)
=
\max\bigl(0,\;1 - \langle \tilde y[b,p], \tilde y[b',q]\rangle\bigr).
\]
Pixels with larger dissimilarity scores are more likely to correspond to different instances and therefore serve as more informative negatives. For each anchor pixel \((b,p)\), we sample \(R\) negatives \(\{q_r\}\) proportionally to \(s^{\mathrm{deb}}\), and use them in the denominator of the NT-Xent loss \(\mathcal{L}_{\rm pxl}\).
%%%%
\\

\noindent\textbf{Theoretical Insight.}
%To give a formal rationale for augmenting our pixel‐wise contrastive loss, we show that even under a mild negative sampling guarantee, each gradient step on our contrastive term provably increases the expected inter-instance margin.

Our contrastive term is motivated by the intuition that better negative sampling should improve separation between different instances in the learned feature space. The following result formalizes this intuition under a mild assumption on the quality of sampled negatives.

\begin{assumption}[Negative Sampling Guarantee]\label{assump:sampling}
When sampling a negative under our instance aware scheme, the probability it originates from a different instance is at least $p>0.5$, where $p$ can be estimated empirically (see Sec.~\ref{emp}).
\end{assumption}
\begin{proposition}[Expected Margin Growth]\label{prop:margin-growth}
Under Assumption \ref{assump:sampling}, one gradient update on $\mathcal L_{\rm pxl}$ increases the expected inter-instance margin $\Delta_{\rm emp}$ by
\[
\varepsilon = \Theta(p\,\lambda_{\rm pxl}) > 0.
\]
This expectation holds even when pseudo-labels are imperfect, provided negatives are sampled using our instance aware strategy.
\end{proposition}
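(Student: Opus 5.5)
The plan is to make the informal quantities precise and then do a first-order (small step size) analysis of one gradient step. Define the empirical inter-instance margin as
\[
\Delta_{\rm emp} = \mathbb{E}\bigl[\langle z_{b,p}, z_{b,p}'\rangle\bigr] - \mathbb{E}\bigl[\langle z_{b,p}, z_{b',q}\rangle \mid \text{$(b',q)$ in a different instance}\bigr],
\]
that is, positive-pair similarity minus cross-instance similarity. Model the update as $z \leftarrow z - \eta\,\lambda_{\rm pxl}\nabla_z \mathcal L_{\rm pxl}$. I would work directly on the normalized embeddings (treating them as free parameters, or assuming the network Jacobian is well-conditioned with singular values bounded in $[\sigma_{\min},\sigma_{\max}]$). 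Under these choices the change in $\Delta_{\rm emp}$ equals $\eta\lambda_{\rm pxl}\langle \nabla_z\Delta_{\rm emp}, -\nabla_z \mathcal L_{\rm pxl}\rangle + O(\eta^2)$.

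The first step is to write out the NT-Xent gradient for a single anchor. With softmax weights $\pi^+ \propto \exp(s^+)$ and $\pi_r \propto \exp(s^-_r)$, the descent direction for $z^{\rm weak}_{b,p}$ is
\[
\tfrac{1}{T}\Bigl[(1-\pi^+)\,z^{\rm strong}_{b,p} - \textstyle\sum_r \pi_r\, z^{\rm strong}_{b',q_r}\Bigr].
\]
The first term always increases positive similarity. The second term repels each sampled negative with weight $\pi_r$.

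Next, split the sampled negatives by an indicator $\xi_r$, equal to $1$ if the negative comes from a different instance. By Assumption~\ref{assump:sampling}, $\mathbb{E}[\xi_r]\ge p$. Negatives with $\xi_r=1$ decrease cross-instance similarity, so they contribute positively to $\Delta_{\rm emp}$. Negatives with $\xi_r=0$ push apart pixels of the same instance. I would bound their harm through their effect on the positive and same-instance terms: each such contribution has magnitude at most $C/T$ per unit weight, because the embeddings are unit-norm. Taking expectation over the sampler then gives
\[
\mathbb{E}[\delta\Delta_{\rm emp}] \ge \eta\lambda_{\rm pxl}\, c\,\bigl[p - (1-p)\bigr] + \eta\lambda_{\rm pxl}\, c'\,(1-\pi^+) - O(\eta^2).
\]
The right-hand side is positive because $p>0.5$ and $\pi^+<1$. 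For a fixed small $\eta$, and with the constants depending only on $T$, $R$, and bounds on the softmax weights, this quantity is $\Theta(p\,\lambda_{\rm pxl})$. The lower bound is $\Omega((2p-1)\lambda_{\rm pxl})$, which is $\Theta(p\lambda_{\rm pxl})$ once $p$ is bounded away from $1/2$; the upper bound $O(p\lambda_{\rm pxl})$ follows from bounded gradients.

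The pseudo-label remark follows because the argument uses pseudo-labels only through the sampler's accuracy $p$. Label noise enters only by lowering $p$, and the conclusion holds as long as $p>0.5$.

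The main obstacle is the cross-term analysis. I need to show that the same-instance repulsions and the indirect coupling through shared network parameters cannot cancel the $p$-weighted gain. Concretely, I need a uniform bound relating the weight $\pi_r$ on wrong negatives to the weight on correct ones; the softmax weights are data-dependent and could in principle concentrate on false negatives. To handle this, I would first assume the $\pi_r$ are comparable, which holds if similarities lie in $[-1/T,1/T]$: then $\pi_r \in [e^{-2/T}/R',\,e^{2/T}/R']$ for a normalizing count $R'$. I would absorb the resulting factor $e^{\pm 2/T}$ into the constants, strengthening the requirement on $p$ to a $T$-dependent threshold if necessary.
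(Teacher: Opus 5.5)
Your plan has a genuine gap at its central inequality $\mathbb{E}[\delta\Delta_{\rm emp}]\ge\eta\lambda_{\rm pxl}c\,[p-(1-p)]+\eta\lambda_{\rm pxl}c'(1-\pi^+)-O(\eta^2)$. It is asserted rather than derived, and it places the $p$-dependent gain in the wrong term. Repelling the anchor from a true negative $z^-_r$ changes its similarity to a cross-instance pixel $z^-$ by $-\eta\lambda_{\rm pxl}\pi_r\langle z^-_r,z^-\rangle/T$. Averaged over cross-instance pixels this becomes $-\eta\lambda_{\rm pxl}\pi_r\langle z^-_r,\bar z^-\rangle/T$, where $\bar z^-$ is their mean embedding, and this has no definite sign. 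Assumption~\ref{assump:sampling} constrains only the sampler, so it cannot supply a sign. Some geometric hypothesis is unavoidable here, and the paper's sketch states one explicitly: inter-instance inner products are $\approx 0$ and intra-instance ones $\approx 1$, which gives $\mathbb{E}[\Delta s^-]\approx 0$. Your plan never introduces such a hypothesis. The well-conditioned-Jacobian variant does not rescue it either, because bounded singular values of $J$ do not make $\langle u,JJ^\top v\rangle$ share the sign of $\langle u,v\rangle$. The paper avoids this by treating the anchor embedding itself as the variable, updated with step $\lambda_{\rm pxl}$.

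The missing idea is to pair the attraction with each repulsion. Since $1-\pi^+=\sum_r\pi_r$, your descent direction equals $\frac1T\sum_r\pi_r(z^+-z^-_r)$. The positive similarity therefore rises by $\frac{\eta\lambda_{\rm pxl}}{T}\sum_r\pi_r(1-\langle z^-_r,z^+\rangle)$, and for unit vectors every summand is exactly nonnegative. A false negative contributes $\approx 0$, because it cancels its own share of the attraction, rather than a negative amount; a true negative contributes $\approx\pi_r$. Taking expectations, with the weights treated as independent of instance membership as the paper implicitly does, gives $\mathbb{E}[\Delta s^+]\approx p\,\lambda_{\rm pxl}(1-\pi^+)$ up to the factor $\eta/T$. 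This is linear in $p$, and it is where the paper's $\varepsilon$ comes from.

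Your symmetric \emph{harm at most $C/T$} accounting discards this cancellation, which is why you end up with $(2p-1)$. It also explains why the weight-concentration obstacle looks fatal to you. False negatives sit close to the anchor, so the softmax does concentrate on them. Your remedy via the comparability bound $e^{\pm 2/T}$ then forces a condition like $p/(1-p)\gtrsim e^{2/T}$, which at the paper's $T=0.2$ means $p\gtrsim 1-e^{-10}$. That is a different and much weaker result than the statement under $p>0.5$. In the paired form, concentration on false negatives only shrinks $\varepsilon$ by a $T$-dependent factor and never flips its sign.

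One point in your favor: your factor $1-\pi^+$ is correct, whereas the paper's sketch uses $\sum_r\alpha_r=1$. It shows that $\varepsilon=\Theta(p\lambda_{\rm pxl})$ also requires $\pi^+$ to stay bounded away from $1$.
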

In practice, raising $\lambda_{\rm pxl}$ enhances margin growth but also increases training cost. If $\lambda_{\rm pxl}$ is too large, it can overemphasize inter-instance separation at the expense of intra-instance cohesion. We validate this effect in Sec.~\ref{emp} and provide a proof sketch in Appendix C.

\subsection{Training Framework}
\label{sec:pipeline}
We formulate teacher adaptation, student distillation, and student refinement as special cases of a unified objective with three terms.  Let
\begin{equation}
\begin{split}
\mathcal{J}(\theta;&\mathcal{D}^l,\mathcal{D}^u;\lambda_{\rm semi},\lambda_{\rm pxl})= \underbrace{\frac{1}{N_l}\sum_{i=1}^{N_l}\ell\bigl(f_\theta(x_i^l),y_i^l\bigr)}_{\mathcal{L}_{\rm sup}} \\
&+ \lambda_{\rm semi}\underbrace{\frac{1}{N_u}\sum_{j=1}^{N_u}\ell\bigl(f_\theta(x_j^u),\hat y_j^u\bigr)}_{\mathcal{L}_{\rm semi}} + \lambda_{\rm pxl}\,\mathcal{L}_{\rm pxl}\bigl(\theta;\mathcal{D}^l\cup\mathcal{D}^u\bigr),
\end{split}
\end{equation}

where $\mathcal{D}^u=\varnothing$, the semi-supervised term vanishes.

\noindent\textbf{Teacher adaptation.}
\label{sec:domain-adaptation}
Starting from pretrained teacher weights \(\theta_T^0\), we first fine-tune the model on the labeled set \(\mathcal{D}^l\):
\[
\theta_T'
=
\arg\min_{\theta}\;
\mathcal{J}\bigl(\theta;\mathcal{D}^l,\varnothing;0,\lambda_{\rm pxl}\bigr).
\]
We then generate pseudo-labels for the unlabeled set using the adapted teacher,
\(
\hat y_j^u = f_{\theta_T'}(x_j^u),
\)
and reinitialize from \(\theta_T^0\) before training on both labeled and pseudo-labeled data:
\[
\theta_T''
=
\arg\min_{\theta}\;
\mathcal{J}\bigl(\theta;\mathcal{D}^l,\mathcal{D}^u;1,\lambda_{\rm pxl}\bigr).
\]
This two-stage procedure yields a teacher that is better adapted to the target domain and produces pseudo-labels that are both more accurate and more spatially consistent.

\noindent\textbf{Knowledge transfer.}
\label{sec:dist}
With the adapted teacher \(\theta_T''\) frozen, we train the student \(\theta_s\) using the same objective:
\begin{equation}
\label{eq:student-distill}
\theta_s^*
=
\arg\min_{\theta_s}\;
\mathcal{J}\bigl(\theta_s;\mathcal{D}^l,\mathcal{D}^u;\lambda_{\rm semi},\lambda_{\rm pxl}\bigr).
\end{equation}
Here, \(\mathcal{L}_{\rm sup}\) provides ground-truth supervision on \(\mathcal{D}^l\), \(\mathcal{L}_{\rm semi}\) transfers pseudo-label knowledge on \(\mathcal{D}^u\), and \(\mathcal{L}_{\rm pxl}\) imposes pixel-wise contrastive regularization across both sets. The coefficients \(\lambda_{\rm semi}\) and \(\lambda_{\rm pxl}\) balance these signals, enabling the student to match or surpass the teacher with far fewer parameters.

\noindent\textbf{Student refinement.}
\label{sec:student-refinement}
Although joint distillation yields a strong initialization, residual pseudo-label noise and contrastive regularization can still introduce bias. As a final step, we fine-tune the student on labeled data only:
\[
\theta_s^{\dagger}
=
\arg\min_{\theta}\;
\mathcal{J}\bigl(\theta;\mathcal{D}^l,\varnothing;0,0\bigr),
\qquad \text{initialized from } \theta_s^*.
\]
This refinement stage reduces pseudo-label drift and sharpens decision boundaries for the target domain.

%% file: sec/4_experiments.tex
\section{Experiments}
\subsection{Experimental Protocol}
\noindent\textbf{Datasets.}
We evaluate our method on two standard instance segmentation benchmarks. \textbf{Cityscapes}~\cite{Cordts2016Cityscapes} contains 2,975 training images and 500 validation images of urban street scenes, annotated with 19 semantic categories, including 8 ``thing'' classes and 11 ``stuff'' classes. \textbf{ADE20K}~\cite{zhou2019semantic} comprises 20,210 training images and 2,000 validation images spanning diverse indoor and outdoor scenes, annotated with 150 semantic categories, including 100 ``thing'' classes and 50 ``stuff'' classes.

\noindent\textbf{Implementation Details.}
All experiments were conducted on Ubuntu~22.04 using Python~3.10 and PyTorch~2.6.0 with CUDA~12.6. Teacher adaptation runs were executed on 2$\times$NVIDIA A100 GPUs, while student training runs were performed on 2$\times$NVIDIA GeForce RTX 4090 GPUs. As a reference point, a single supervised fine-tuning run of the teacher (Grounding-DINO) on the Cityscapes labeled split required approximately 3.5 GPU-hours, whereas a single student training run on the same dataset required approximately 17 GPU-hours.

%I think we can move it to Suppl
%For our semi‐supervised experiments, we sample $10\%$ of the Cityscapes training set as labeled and use the remaining images as unlabeled. For ADE20K, to reduce computational cost we use stratified sampling to select $20\%$ of the training pool while preserving the original per category distribution, yielding 3,537 images. From this subset, 1,000 images (10 per instance class) serve as labeled data and the remaining 2,537 as unlabeled. Detailed statistics are provided in Appendix~A, and we will release the exact split files alongside our code.  

\noindent\textbf{Teacher and Student Architectures.}
Our teacher is a fused ensemble of Grounding-DINO-Large~\cite{liu2024grounding} and SAM2-L~\cite{ravi2024sam}. Since the latest Grounding-DINO model is not fully open-source, we use its open-source counterpart, mm-Grounding-DINO~\cite{zhao2024open}. For the student, we use a compact architecture consisting of a DINOv2-S encoder~\cite{oquab2023dinov2}, a DPT-S decoder head~\cite{ranftl2021vision}, and a lightweight transformer decoder in the style of Mask2Former~\cite{cheng2022masked}. This design provides a strong trade-off between accuracy and efficiency while remaining substantially smaller and more deployable than the teacher. We analyze alternative student designs in Sec.~\ref{encoder-decoder}, and provide optimization details and hyperparameters in Appendix~B.

\subsection{Main Results}
\label{sec:results}
We compare our method against a range of baselines, including supervised fine-tuning and recent semi-supervised knowledge distillation approaches. Table~\ref{tab:results_both} reports maskAP and maskAP${50}$ on Cityscapes and ADE20K. In the teacher adaptation stage (568M parameters), adding our pixel-level contrastive loss improves performance over self-training on both benchmarks. On Cityscapes, maskAP increases from 29.8 to 30.5 (+0.7) and maskAP${50}$ from 54.9 to 56.6 (+1.7). On ADE20K, maskAP improves from 14.8 to 15.2 (+0.4) and maskAP$_{50}$ from 23.7 to 24.5 (+0.8). In the teacher adaptation setting, the absolute gains are modest because the teacher already starts from a strong pretrained foundation model. The consistent improvements across datasets nevertheless suggest that pixel-wise contrastive regularization improves feature discrimination and yields more spatially consistent pseudo-labels for downstream student distillation.

In the student distillation stage, our 52M-parameter student (about 9\% of the composite teacher size) achieves 32.2 maskAP and 56.5 maskAP$_{50}$ on Cityscapes, outperforming prior semi-supervised distillation baselines. After the final refinement stage, the student reaches 33.9 maskAP and 58.7 maskAP$_{50}$, surpassing the adapted teacher by +3.4 maskAP. On ADE20K, the student attains 16.1 maskAP and 27.4 maskAP$_{50}$ after knowledge transfer, and further improves to 16.7 maskAP and 28.0 maskAP$_{50}$ after refinement, again exceeding the adapted teacher. These results show that the proposed pipeline transfers knowledge effectively from a large foundation model into a substantially smaller student across both benchmarks. Additional ablations under varied label splits are presented in Sec.~\ref{split}. To compare efficiency, Fig.~\ref{fig:scatter_chart} summarizes key efficiency metrics for the teacher and student models on a logarithmic scale.

\begin{table*}[ht]
  \centering
  \caption{\textbf{Main results on Cityscapes and ADE20K} with 10\% labeled data.
We report teacher adaptation (568M) and student distillation (52M).
* denotes adapted methods. Blue shading indicates the best and second-best results.}
  \label{tab:results_both}
  \setlength\tabcolsep{6pt}
  \begin{adjustbox}{width=\textwidth}
  \begin{tabular}{l l S[table-format=2.1] S[table-format=2.1] S[table-format=2.1] S[table-format=2.1]}
    \toprule
    \textbf{Method} & \textbf{Data Regime}
      & \multicolumn{2}{c}{\textbf{Cityscapes}}
      & \multicolumn{2}{c}{\textbf{ADE20K}} \\
    \cmidrule(lr){3-4}\cmidrule(lr){5-6}
    & & {maskAP} & {maskAP$_{50}$} & {maskAP} & {maskAP$_{50}$} \\
    \midrule[0.75pt]

    \multicolumn{6}{l}{\textit{Teacher Adaptation}} \\
    \cmidrule{1-6}
    Zero-shot VFM & None (pretrained) & 22.0 & 42.3 & 8.1 & 18.2 \\
    Supervised fine-tuning & Labeled only & 28.7 & 53.4 & 14.2 & 23.5 \\
    Self-training*~\cite{xie2020self} & Labeled+Unlabeled & 29.7 & 54.9 & 14.6 & 23.6 \\
    Unbiased Teacher*~\cite{liu2021unbiased} & Labeled+Unlabeled & \cellcolor{lightblue1}29.8 & \cellcolor{lightblue1}54.9 & \cellcolor{lightblue1}14.8 & \cellcolor{lightblue1}23.7 \\
    ours & Labeled+Unlabeled & \cellcolor{lightblue3}30.5 & \cellcolor{lightblue3}56.6 & \cellcolor{lightblue3}15.2 & \cellcolor{lightblue3}24.5 \\
    \midrule[0.75pt]

    \multicolumn{6}{l}{\textit{Student Distillation}} \\
    \cmidrule{1-6}
    Supervised fine-tuning & Labeled only & 21.1 & 38.7 & 13.9 & 24.2 \\
    PAIS~\cite{hu2023pseudo} & Labeled+Unlabeled & 22.9 & 44.9 & 10.3 & 18.3 \\
    Guided dist.~\cite{berrada2024guided} & Labeled+Unlabeled & 30.8 & 52.9 & 14.2 & 23.8 \\
    Vemulapalli et al.*~\cite{Vemulapalli2024KD} & Unlabeled only & 24.4 & 45.6 & 5.1 & 9.3 \\
    Depth-Guided~\cite{chen2024depth} & Labeled+Unlabeled & 30.9 & 52.9 & \text{-} & \text{-} \\
    $S^{4}M$~\cite{yoon2025s} & Labeled+Unlabeled & \cellcolor{lightblue1}33.3 & \cellcolor{lightblue1}56.7 & \text{-} & \text{-} \\
    ours (knowledge transfer) & Labeled+Unlabeled & 32.2 & 56.5 & \cellcolor{lightblue1}16.1 & \cellcolor{lightblue1}27.4 \\
    ours (student refinement) & Labeled only & \cellcolor{lightblue3}33.9 & \cellcolor{lightblue3}58.7 & \cellcolor{lightblue3}16.7 & \cellcolor{lightblue3}28.0 \\
    \bottomrule
  \end{tabular}
  \end{adjustbox}
\end{table*}

%\setlength{\intextsep}{0.5\baselineskip}   % tighten space above/below wrap
%\begin{wrapfigure}[13]{r}[0pt]{0.5\textwidth}
%  \centering
%  \vspace{-10pt}
%  \includegraphics[width=\linewidth]{figures/scatter_log.pdf}
%  \vspace{-3mm}
%  \caption{Efficiency comparison (log scale).
%  }
%  \label{fig:scatter_chart}
%\end{wrapfigure}

\begin{figure}
    \centering
    \includegraphics[width=0.85\linewidth]{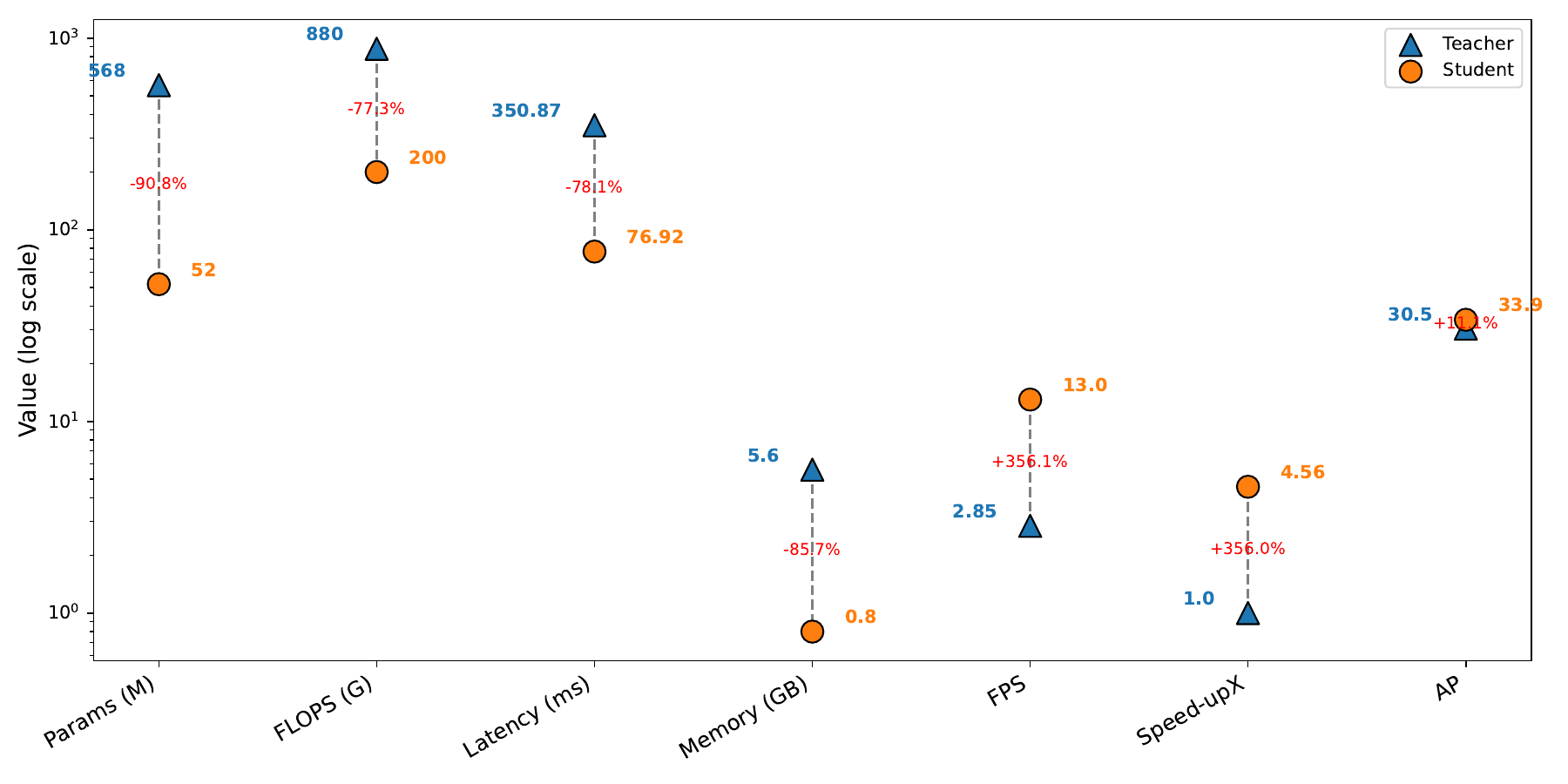}
    \caption{Efficiency comparison (log scale).}
    \label{fig:scatter_chart}
\end{figure}

\subsection{Empirical Validation}
\label{emp}
We empirically validate Proposition~\ref{prop:margin-growth} by monitoring the false negative rate (\(\mathrm{FNR}\)), defined as the fraction of sampled negatives that actually belong to the same instance, together with the empirical margin
\[
  \Delta_{\rm emp} = \mathrm{NegMean} - \mathrm{PosMean}.
\]
Let \(p = 1 - \mathrm{FNR}\) denote the probability of sampling a true negative. Figure~\ref{fig:metrics} reports the empirical margin every 10k iterations for \(\lambda_{\rm pxl}\in\{0.01,0.05,0.1,0.2\}\) (left), the false negative rate for \(\lambda_{\rm pxl}=0.1\) (center, dashed at \(p=0.5\)), and the contrastive loss for \(\lambda_{\rm pxl}=0.1\) (right). Throughout training, we observe \(p>0.9\) and an approximately linear increase in \(\Delta_{\rm emp}\) with \(\lambda_{\rm pxl}\), consistent with Proposition~\ref{prop:margin-growth}.

\begin{figure}[ht]
  \centering
  \begin{subfigure}[b]{0.31\columnwidth}
    \includegraphics[width=\textwidth]{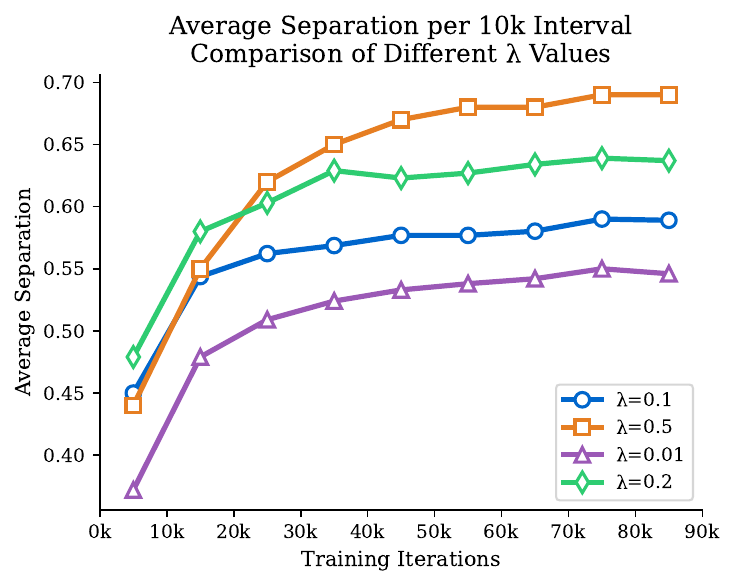}
    \caption{Empirical Margin}
    \label{fig:sep}
  \end{subfigure}\hfill
  \begin{subfigure}[b]{0.31\columnwidth}
    \includegraphics[width=\textwidth]{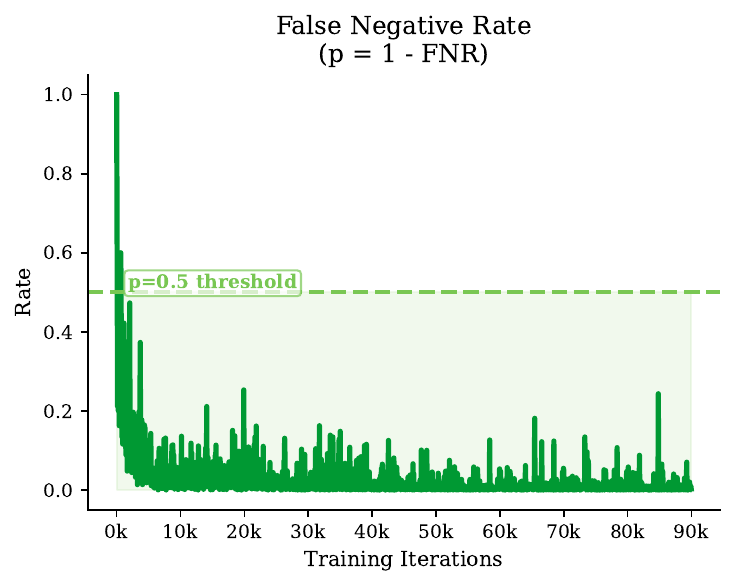}
    \caption{\(\mathrm{FNR}\)}
    \label{fig:fnr}
  \end{subfigure}\hfill
  \begin{subfigure}[b]{0.31\columnwidth}
    \includegraphics[width=\textwidth]{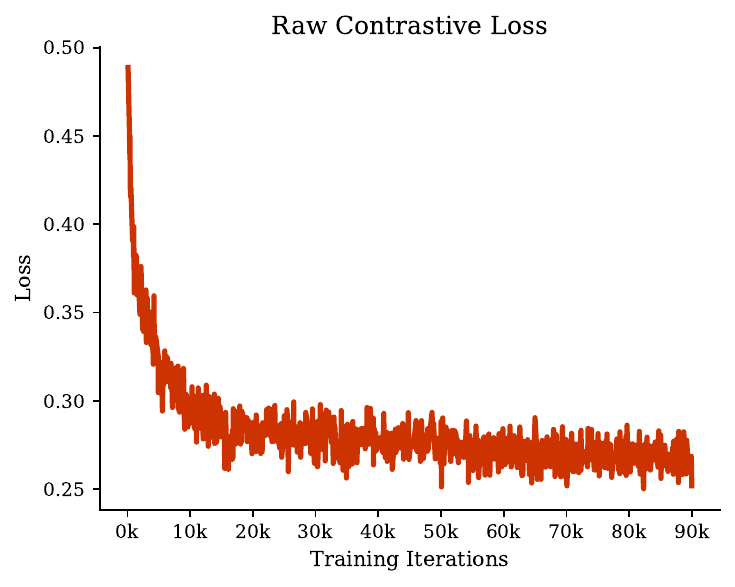}
    \caption{Contrastive Loss}
    \label{fig:loss}
  \end{subfigure}
  \caption{
    \textbf{Left:} Empirical margin (\(\mathrm{NegMean}-\mathrm{PosMean}\)) measured every 10k iterations for different values of \(\lambda_{\rm pxl}\).
    \textbf{Center:} False negative rate (\(\mathrm{FNR}\)) for \(\lambda_{\rm pxl}=0.1\), with the dashed line marking \(p=0.5\).
    \textbf{Right:} Contrastive loss for \(\lambda_{\rm pxl}=0.1\).
  }
  \label{fig:metrics}
\end{figure}

\subsection{Ablation Studies}
We perform ablation experiments to isolate the contribution of each component in the proposed pipeline. Unless otherwise noted, all ablations are conducted on Cityscapes with 10\% labeled data.
%%These include analyses of loss functions, training stages, negative sampling strategies, hyperparameters, and student architecture choices.  

\noindent\textbf{Impact of Loss Components.}
The distillation objective combines three terms: supervised loss (\(\mathcal{L}_{\rm sup}\)), pseudo-label loss (\(\mathcal{L}_{\rm semi}\)), and pixel-level contrastive loss (\(\mathcal{L}_{\rm pxl}\)). Table~\ref{tab:loss_ablation} shows that adding \(\mathcal{L}_{\rm semi}\) improves student performance from 21.1 to 30.7 maskAP, while further including \(\mathcal{L}_{\rm pxl}\) yields the best result of 32.2 maskAP. These results indicate that pseudo-label supervision and pixel-level contrastive regularization provide complementary benefits.

% ---------- Table 2 ----------
\begin{table}[htbp]
  \centering
  \caption{Ablations on Cityscapes (10\% labels): effect of loss terms.}
  \resizebox{0.9\linewidth}{!}{
  \begin{tabular}{lccc cc}
    \toprule
    \textbf{Method} & $\mathcal{L}_{\rm sup}$ & $\mathcal{L}_{\rm semi}$ & $\mathcal{L}_{\rm pxl}$ & Teacher & Student \\
    \midrule
    (a) Sup. only     & \checkmark &   &   & 28.7 & 21.1 \\
    (b) + Pseudo      & \checkmark & \checkmark &   & 29.7 & 30.7 \\
    (c) + Pixel loss  & \checkmark &   & \checkmark & 29.6 & 27.5 \\
    (d) (b)+(c)       & \checkmark & \checkmark & \checkmark & \bf 30.5 & \bf 32.2 \\
    \bottomrule
  \end{tabular}}
  \label{tab:loss_ablation}
\end{table}

\noindent\textbf{Impact of Training Stages.}
Beyond individual loss terms, we further ablate the contribution of each training stage. Table~\ref{tab:stage_ablation} shows the effect of removing one stage at a time. The supervised baseline achieves 21.1 maskAP. Adding distillation alone improves performance to 23.8 (+2.7), and further adding student fine-tuning raises it to 32.2 (+8.4). Without teacher adaptation, performance drops to 25.7, highlighting the importance of aligning the teacher with the target domain. The full three-stage pipeline achieves the best result of 33.9 maskAP, a gain of +12.8 over the supervised baseline.

% ---------- Table 3 ----------
\begin{table}[htbp]
  \centering
  \caption{Ablations on Cityscapes (10\% labels): effect of training stages.}
  \resizebox{0.9\linewidth}{!}{
  \begin{tabular}{lccc c}
    \toprule
    \textbf{Variant} & Teacher Adapt. & Distill. & Student FT & maskAP \\
    \midrule
    Full pipeline          & \checkmark & \checkmark & \checkmark & 33.9 \\
    No Student FT          & \checkmark & \checkmark &            & 32.2 \\
    No Teacher Adapt.      &            & \checkmark & \checkmark & 25.7 \\
    Distillation Only      &            & \checkmark &            & 23.8 \\
    No Distill. (Sup.)     &            &            & \checkmark & 21.1 \\
    \bottomrule
  \end{tabular}}
  \label{tab:stage_ablation}
\end{table}

%\noindent\textbf{Impact of Training Stages.}
%Beyond the contribution of individual loss terms, we further ablate each stage of CAST to justify their necessity. 
%Table~\ref{tab:stage_ablation} shows results on Cityscapes (10\% labels), where we drop exactly one stage at a time.

%\noindent The supervised baseline achieves 21.1 maskAP. Adding distillation alone improves this to 23.8 (+2.7), and further adding student fine-tuning raises it to 32.2 (+8.4). 
%Without teacher adaptation, performance drops to 25.7, underscoring the need to align the teacher with the target domain. 
%The full three-stage CAST pipeline achieves best result of 33.9 maskAP, a +12.8 improvement over baseline.
\begin{figure*}[!htpb]
\begin{center}
   \includegraphics[width=1\linewidth]{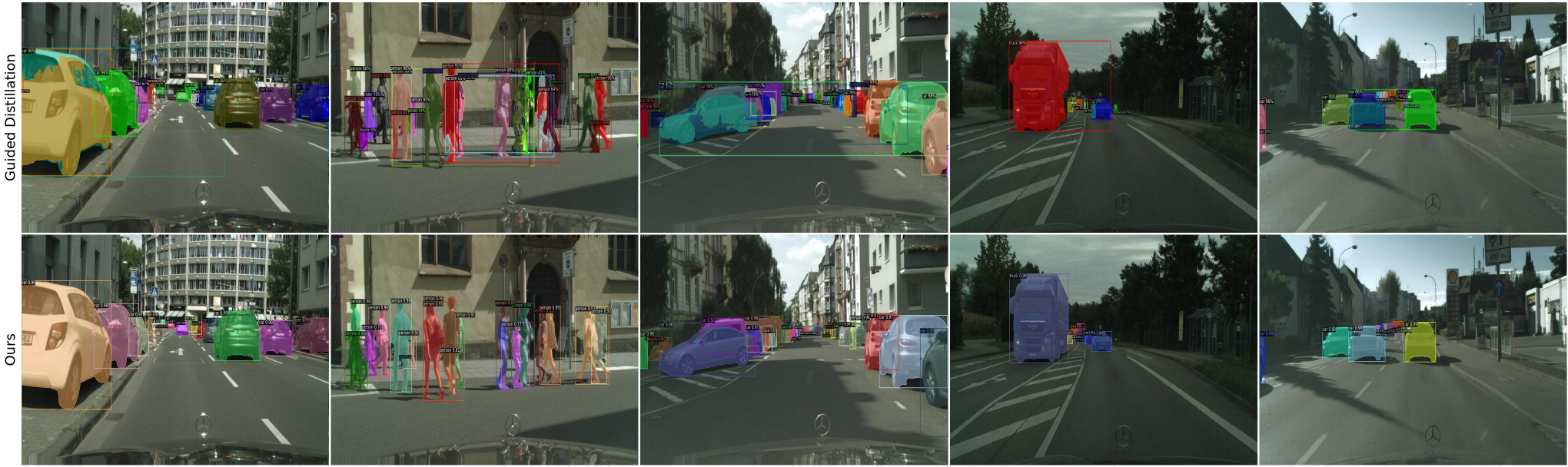}
\end{center}
\caption{\textbf{Qualitative results on Cityscapes.} Guided dist.~\cite{berrada2024guided} (top) versus our method (bottom).}
\label{fig:qual}
\end{figure*}
\noindent\textbf{Ablation of Negative Sampling Strategies.}
To validate the proposed negative sampling strategy in the pixel-level contrastive loss, Table~\ref{tab:sampling_ablation} compares four variants: \emph{Uniform}, which samples negatives uniformly across the image; \emph{Mask-Only}, which derives the sampling distribution solely from mask predictions; \emph{Class-Only}, which uses only class predictions; and \emph{Fusion}, which combines mask and class predictions. The fusion strategy achieves the best performance, reaching 32.2 maskAP and 56.5 maskAP$_{50}$, indicating that the two sources of information are complementary for identifying informative negatives.

%\noindent\textbf{Ablation of Negative Sampling via Various Probability Maps.}
%To validate our negative sampling strategy in the pixel-level contrastive loss, Table~\ref{tab:mask_class_fusion} compares four sampling methods: \textbf{Uniform:} negatives sampled uniformly across the image; \textbf{Mask-Only:} The probability map is derived solely from mask predictions, with class probabilities assumed to be uniform. \textbf{Class-Only:} The map is generated only from class predictions, assuming a uniform spatial distribution for the mask.  \textbf{Fusion:} Combining both mask and class predictions. The fusion strategy achieves the best results, with 32.2 maskAP and 56.5 AP$_{50}$.

\begin{table}[htbp]
  \centering
  \caption{
    \textbf{Ablation of negative sampling strategies on Cityscapes.}
    Top: quantitative results for uniform, mask-only, class-only, and fusion samplers.
    Bottom: schematic illustration of the corresponding pixel-level sampling distributions.
  }
  \label{tab:sampling_ablation}
  \vspace{0.3em}
  \resizebox{0.8\linewidth}{!}{
  \begin{tabular}{lSS}
    \toprule
    Method      & {maskAP (\%)} & {maskAP$_{50}$ (\%)} \\
    \midrule
    Uniform      & 29.4 & 50.2 \\
    Mask-Only    & 30.6 & 55.0 \\
    Class-Only   & 31.1 & 55.3 \\
    Fusion       & \bfseries 32.2 & \bfseries 56.5 \\
    \bottomrule
  \end{tabular}}
  \vspace{0.8em}

  \includegraphics[width=0.7\linewidth, height=3cm, keepaspectratio=false]{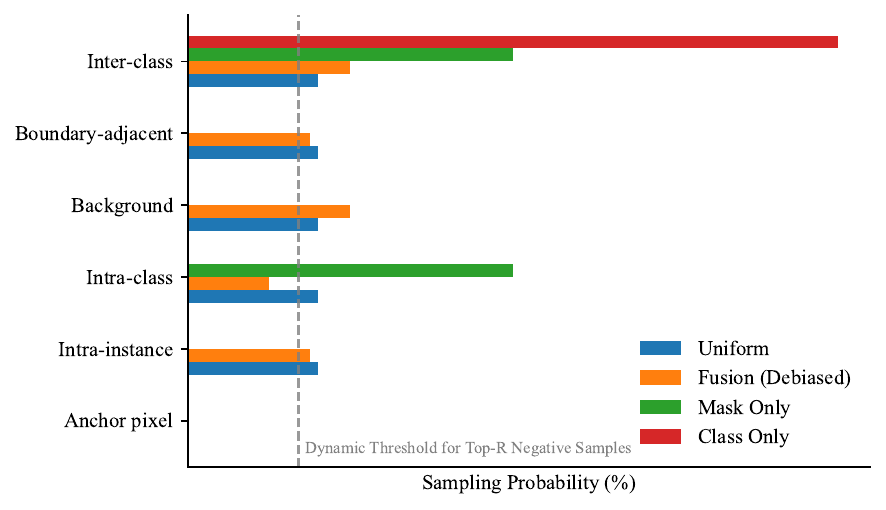}
\end{table}

\noindent\textbf{Student Architecture Variants.}
\label{encoder-decoder}
We evaluate two design axes for the student model under the distillation protocol: (i) the encoder backbone with a fixed DPT decoder, and (ii) the decoder head with a fixed DINOv2-S encoder. Table~\ref{tab:arch_ablation} reports accuracy and parameter counts on the Cityscapes validation set. The combination of a DINOv2-S encoder and DPT head provides the best accuracy while maintaining a compact model size.

\begin{table}[!htbp]
  \centering
  \small
  \caption{
    \textbf{Architecture ablations on Cityscapes.}
    Top: encoder backbone comparison with a fixed DPT decoder.
    Bottom: decoder head comparison with a fixed DINOv2-S encoder.
  }
  \label{tab:arch_ablation}

  \vspace{0.3em}
  \renewcommand{\arraystretch}{0.7}
  \resizebox{0.9\linewidth}{!}{
  \begin{tabular}{lccc}
    \toprule
    Encoder    & maskAP & maskAP$_{50}$ & Params (M) \\
    \midrule
    ResNet50   & 25.5 & 49.3 & 24 \\
    SAM2-S     & 22.1 & 39.2 & 35 \\
    DINOv2-S   & \bfseries 30.7 & \bfseries 54.9 & 22 \\
    \bottomrule
  \end{tabular}}
  \vspace{0.8em}

  \resizebox{0.9\linewidth}{!}{
  \begin{tabular}{lccc}
    \toprule
    Decoder    & maskAP & maskAP$_{50}$ & Params (M) \\
    \midrule
    FPN        & 28.9 & 52.4 & 18 \\
    DPT        & \bfseries 30.7 & \bfseries 54.9 & 22 \\
    \bottomrule
  \end{tabular}}
\end{table}

\begin{figure*}[!htpb]
\begin{center}
   \includegraphics[width=1\linewidth]{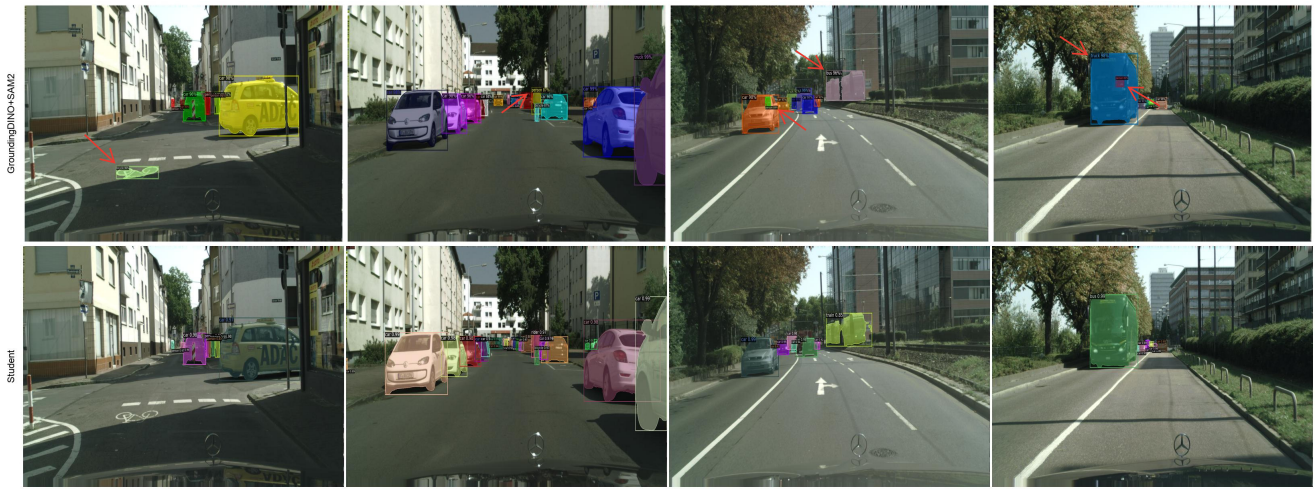}
   \end{center}
\caption{\textbf{Qualitative bias reduction in stage-wise distillation.}
Top row: pseudo-labels generated by the adapted teacher.
Bottom row: student predictions after distillation and refinement, showing reduced pseudo-label bias and sharper instance boundaries.}
\label{fig:qual3}
\end{figure*}

\begin{figure*}[!htpb]
\begin{center}
   \includegraphics[width=0.96\linewidth]{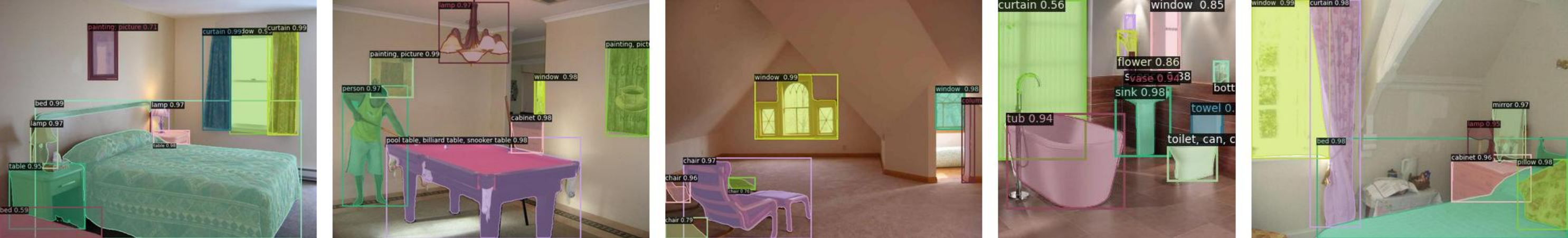}
   \end{center}
   \caption{\textbf{Qualitative results on ADE20K.}}
\label{fig:qual2}
\end{figure*}
\begin{figure}[!htpb]
    \centering
    \includegraphics[width=0.85\linewidth]{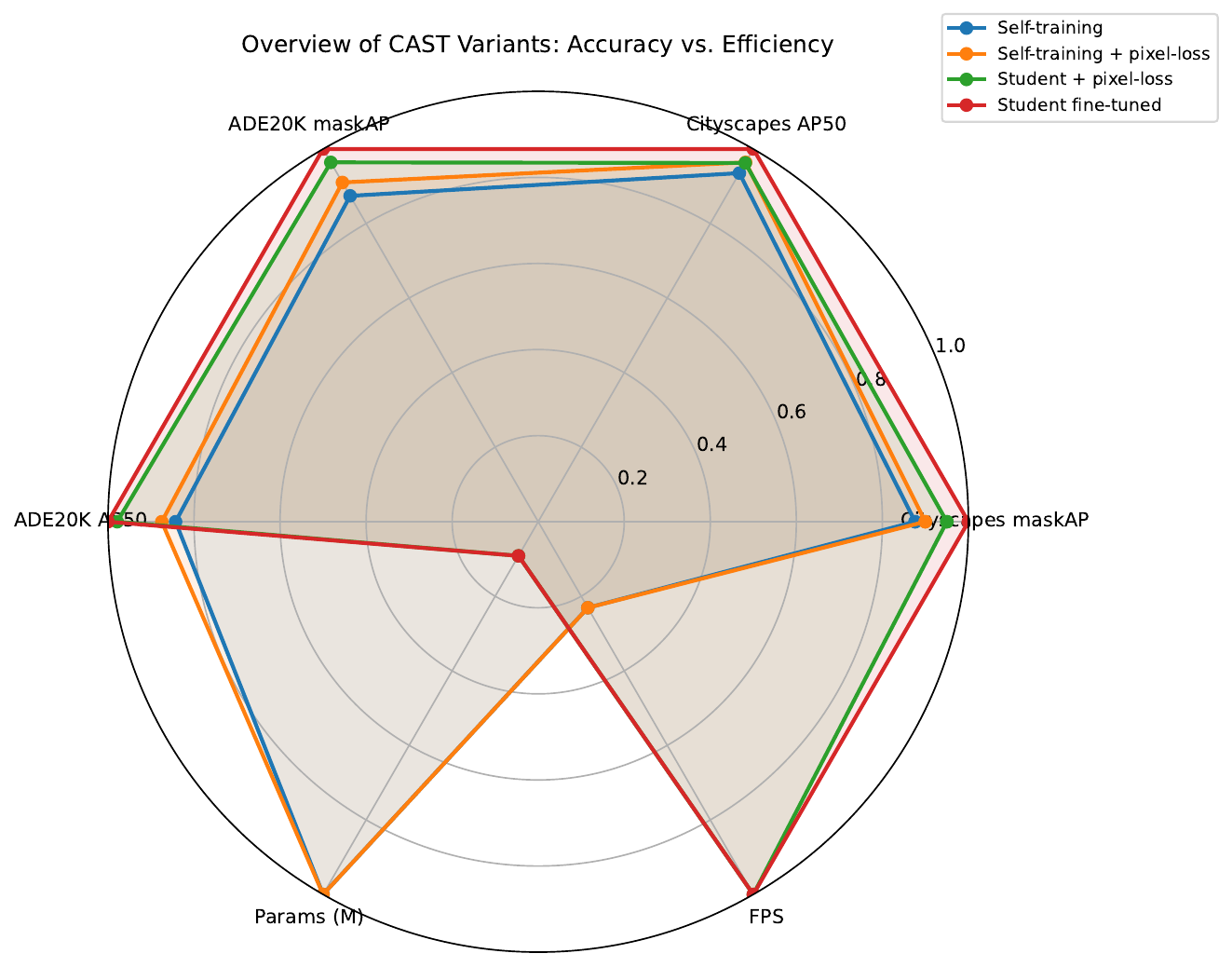}
    \caption{Performance--complexity radar chart (normalized).}
    \label{fig:radar_chart}
\end{figure}

\noindent\textbf{Scalability with Labeled Fractions.}
\label{split}
We evaluate the method under different fractions of labeled data to assess robustness in low-label regimes. Following the protocol in~\cite{berrada2024guided}, we train with 5\%, 10\%, and 30\% labeled splits of Cityscapes. As shown in Table~\ref{tab:scalability}, the method consistently outperforms prior approaches across all fractions. At 5\% labels, it achieves 30.7 AP, substantially exceeding PAIS (18.0) and Guided Distillation (23.0). At 30\% labels, it reaches 40.4 AP, surpassing the strongest baseline (37.8 from S$^4$M) by +2.6 AP. These results show that the method remains effective under scarce supervision while scaling favorably with additional labeled data. \noindent Additional ablations are provided in the supplementary material.
%including teacher adaptation variants, alternative loss formulations, sampling scope, and backbone comparisons, 

\begin{table}[htbp]
  \centering
  \caption{\textbf{Scalability across label fractions on Cityscapes.} Results with different proportions of labeled data.}
  \label{tab:scalability}
  \resizebox{0.95\linewidth}{!}{
  \begin{tabular}{llcccccc}
    \toprule
    \textbf{Dataset} & \textbf{Fraction} & Teacher Adapt. & Distillation & Student (refined) & PAIS~\cite{hu2023pseudo} & Guided dist.~\cite{berrada2024guided} & S$^4$M~\cite{yoon2025s} \\
    \midrule
    \multirow{3}{*}{Cityscapes} 
      & 5\%   & 29.4 & 29.2 & \textbf{30.7} & 18.0 & 23.0 & 30.1 \\
      & 10\%  & 30.5 & 32.2 & \textbf{33.9} & 22.9 & 30.8 & 33.3 \\
      & 30\%  & 33.3 & 38.5 & \textbf{40.4} & 32.8 & 35.6 & 37.8 \\
    \bottomrule
  \end{tabular}
  }
\end{table}

%% file: sec/5_conclusion.tex
\section{Conclusion}
We presented a semi-supervised knowledge distillation pipeline that combines self-training, instance-aware pixel-wise contrastive learning, and final supervised refinement to transfer knowledge from large vision foundation models into compact student experts. Empirically, the resulting student is approximately \(11\times\) smaller than the teacher while surpassing the adapted teacher by +3.4 maskAP on Cityscapes and +1.5 maskAP on ADE20K. These results show that pixel-level contrastive regularization can improve pseudo-label quality and enable efficient low-label adaptation of strong foundation models. Our theoretical analysis further supports the proposed negative sampling strategy by showing that, under mild assumptions, it increases the expected inter-instance margin. Future work includes simplifying the multi-stage pipeline, evaluating on additional domains, and extending the framework to broader efficient perception settings.
\section{Acknowledgments}
Portions of this research were conducted with the advanced computing resources provided by Texas A\&M High Performance Research Computing.

%% file: sec/X_suppl.tex
\clearpage
\setcounter{page}{1}
%\maketitlesupplementary

%% arXiv does not compile this file standalone

\section*{Supplementary Material}
\label{supplementary}

This document provides additional details to support the main paper, including dataset statistics, full hyperparameter settings, formal proof, extended training protocols, and additional ablation studies. 

\begin{comment}
    
\section{Dataset Splits}
\label{sec:datasets}

Table~\ref{tab:datasets} summarizes the datasets used in our experiments. We use a 10\% labelled split of Cityscapes’ 2\,975 training images (298 labeled / 2\,677 unlabeled) and a stratified 20\% split of ADE20K’s 20\,210 training images (1\,000 labeled / 2\,537 unlabeled). Standard validation sets are retained (500 images for Cityscapes, 2\,000 for ADE20K). Exact image‐ID lists will be released with our code.

\begin{table}[ht]
  \centering
  \small
  \caption{Semi-supervised splits used in our experiments.}
  \label{tab:datasets}
  \begin{tabular}{lccc}
    \toprule
    Dataset          & \# Classes & Labeled / Unlabeled & Validation \\
    \midrule
    Cityscapes       & 8          & 298 / 2\,677        & 500        \\
    ADE20K           & 100        & 1\,000 / 2\,537     & 2\,000      \\
    \bottomrule
  \end{tabular}
\end{table}
\end{comment}

\section{Hyperparameters}
\label{sec:hyperparams}

Key teacher and student hyperparameters are summarized in Table~\ref{tab:hyperparams}. Results are averages over three independent runs with different random seeds.

\begin{table*}[ht]
  \centering
  \small
  \caption{Hyperparameter settings.}
  \label{tab:hyperparams}
  \begin{tabular}{lcc}
    \toprule
    \textbf{Parameter}                  & \textbf{Teacher}                       & \textbf{Student}                                          \\
    \midrule
    Learning rate                       & $5.0\times10^{-5}$                     & Encoder: $5.0\times10^{-6}$; Decoder: $5.0\times10^{-5}$ \\
    Scheduler                           & Multi-step (milestones at 0.9, 0.95)   & PolyLR (power 0.9)                                        \\
    Batch size                          & 4                                      & 8                                                         \\
    \midrule
    Weight decay   & 0.01 & 0.05  \\
    Contrastive loss weight    & 0.2   & 0.2    \\
    Pseudo-label threshold    & 0.3   & 0.3    \\
    Dropout rate         & —       & 0.1   \\
    Gradient clipping    & —    & $\ell_2$ norm 0.1   \\
    \midrule
    Optimizer    & \multicolumn{2}{c}{AdamW ($\beta_1=0.9$, $\beta_2=0.999$)}      \\
    Augmentations     & \multicolumn{2}{c}{Weak: flip, resize; Strong: random resized crop, jitter, grayscale, blur } \\
    Loss weights (mask / class)         & \multicolumn{2}{c}{5 / 2}                                                \\
    \bottomrule
  \end{tabular}
\end{table*}

\section{Proof Sketch of Proposition~3.1}
\label{sec:prof-sketch}

\begin{proof}[Proof Sketch]
Let \(z_a\), \(z^+\) and \(\{z^-_r\}_{r=1}^R\) be the unit norm embeddings of an anchor pixel, its positive, and \(R\) negatives.  Define
\[
s^+ = \langle z_a,\,z^+\rangle,
\qquad
s^-_r = \langle z_a,\,z^-_r\rangle,
\]
and the pixel-wise contrastive loss
\[
\ell(z_a)
= -\log\frac{\exp(s^+)}{\exp(s^+)+\sum_{r=1}^R\exp(s^-_r)}.
\]
Let
\[
Z = \exp(s^+)+\sum_{r=1}^R\exp(s^-_r),
\qquad
\alpha_r = \frac{\exp(s^-_r)}{Z}.
\]
A straightforward gradient computation gives
\[
\nabla_{z_a}\ell
= \sum_{r=1}^R \alpha_r\,(z^-_r - z^+).
\]
Applying one gradient descent step with step size \(\lambda_{\rm pxl}\):
\[
z_a' = z_a - \lambda_{\rm pxl}\,\nabla_{z_a}\ell
\;=\;
z_a + \lambda_{\rm pxl}\sum_{r=1}^R \alpha_r\,(z^+ - z^-_r).
\]
For a randomly chosen negative \(z^-\),
\begin{align*}
\Delta s^+
&= \langle z_a' - z_a,\,z^+\rangle
= \lambda_{\rm pxl}\sum_{r=1}^R \alpha_r\bigl(1 - \langle z^-_r,\,z^+\rangle\bigr),\\
\Delta s^-
&= \langle z_a' - z_a,\,z^-\rangle
= \lambda_{\rm pxl}\sum_{r=1}^R \alpha_r\bigl(\langle z^+,\,z^-\rangle - \langle z^-_r,\,z^-\rangle\bigr).
\end{align*}

By Assumption~3.1, each negative embedding \(z^-_r\) is inter-instance with probability \(p\), in which case
\(\langle z^-_r,\,z^+\rangle\approx0\), and intra-instance with probability \(1-p\), in which case
\(\langle z^-_r,\,z^+\rangle\approx1\).  Hence
\[
  \EE\bigl[1 - \langle z^-_r,\,z^+\rangle\bigr]
  = p\cdot1 + (1-p)\cdot0
  = p,
\]
and since \(\sum_{r=1}^R\alpha_r=1\), it follows that
\[
  \EE[\Delta s^+]
  = \lambda_{\rm pxl}\sum_{r=1}^R\alpha_r\,
    \EE\bigl[1 - \langle z^-_r,\,z^+\rangle\bigr]
  = p\,\lambda_{\rm pxl}.
\]
Meanwhile, every term in \(\Delta s^-\) involves an inter-instance inner product, either
\(\langle z^+,z^-\rangle\) or \(\langle z^-_r,z^-\rangle\) each of which vanishes in expectation, so
\(\EE[\Delta s^-]\approx0\).  Therefore
\[
  \EE[\Delta s^+ - \Delta s^-]
  = p\,\lambda_{\rm pxl} - 0
  = \Theta\!\bigl(p\,\lambda_{\rm pxl}\bigr)
  = \varepsilon>0,
\]
i.e.\ one update on \(\mathcal L_{\rm pxl}\) increases the expected inter-instance margin by \(\varepsilon\).
\end{proof}

\begin{remark}[Why \(\langle z^+, z^-\rangle \approx 0\) holds]
Under the InfoNCE objective (§3.2), the normalized weights for negative pairs,
\(\alpha_r = \frac{e^{s^-_r}}{e^{s^+} + \sum_{r}e^{s^-_r}},\)
vanish at convergence, i.e.\ \(\alpha_r \approx 0\).  Moreover, in high dimensional embeddings, random unit vectors have inner products concentrating near zero, and contrastive training further pushes these negative similarities into a tight, small magnitude distribution~\cite{chen2020simple}.  Thus it is reasonable to approximate \(\langle z^+,z^-\rangle\approx0\) up to \(O(1/\sqrt{D})\) fluctuations.
\end{remark}

\section{More Training Details}
\label{sec:training-details}
All teacher models are first fine-tuned for 1k iterations on the labeled set, followed by 5k iterations of self-training with pseudo-labels. For student models, training on the Cityscapes dataset spans 90k iterations, consistent with prior works, while ADE20k dataset is trained for 80k iterations. Finally, both datasets undergo an additional supervised fine-tuning phase for 2k iterations.

\section{Additional Ablation Studies}
\label{sec:more-ab}

\subsection{Hyperparameter Sensitivity}
\label{sec:ablate-hyperparam}

We analyze the sensitivity of the proposed contrastive learning component to three key hyperparameters on Cityscapes: the contrastive loss weight $\lambda_{\rm pxl}$, the number of negatives per anchor $K$, and the temperature $T$. Table~\ref{tab:hyperparam} reports teacher and student performance (maskAP and maskAP$_{50}$) across the full hyperparameter sweep.

For the contrastive weight, performance improves steadily as $\lambda_{\rm pxl}$ increases from $0$ to $0.2$, indicating that pixel-level contrastive supervision provides useful regularization during both teacher adaptation and student distillation. Larger values (e.g., $0.5$) begin to slightly degrade performance, suggesting that overly strong contrastive weighting may interfere with the supervised objective.

For the number of negatives, $K=256$ provides the best trade-off between accuracy and computational cost. Although $K=512$ slightly improves teacher maskAP, the gains are marginal relative to the increased sampling overhead. 

Finally, the temperature parameter shows stable behavior, with $T=0.2$ consistently yielding the best performance across both teacher and student models. Based on these results, we adopt $\lambda_{\rm pxl}=0.2$, $K=256$, and $T=0.2$ in all experiments reported in the main paper.

\begin{table}[!htbp]
  \centering
  \small
  \caption{\textbf{Hyperparameter ablation on Cityscapes.}}
  \label{tab:hyperparam}
  \resizebox{\linewidth}{!}{%
  \begin{tabular}{ll
                  SSSSS
                  SSS
                  SSS}
    \toprule
    \multirow{2}{*}{Model} 
      & \multirow{2}{*}{Metric}
      & \multicolumn{5}{c}{Contrastive Loss Weight ($\lambda_{\rm pxl}$)}
      & \multicolumn{3}{c}{Negatives per Anchor ($K$)}
      & \multicolumn{3}{c}{Temperature ($T$)} \\
    \cmidrule(lr){3-7} \cmidrule(lr){8-10} \cmidrule(lr){11-13}
      &
      & {0} & {0.01} & {0.1} & {\bfseries 0.2} & {0.5}
      & {128} & {\bfseries 256} & {512}
      & {0.1} & {\bfseries 0.2} & {0.4} \\
    \midrule
    \multirow{2}{*}{Teacher}
      & AP
              & 29.7 & 29.9 & 30.2 & \bfseries 30.5 & 30.1
              & 30.4 & \bfseries 30.5 & 30.9
              & 30.1 & \bfseries 30.5 & 29.8 \\
      & AP$_{50}$
              & 55.3 & 55.7 & 56.1 & \bfseries 56.6 & 56.1
              & 56.3 & \bfseries 56.6 & 57.1
              & 55.9 & \bfseries 56.6 & 55.3 \\
    \addlinespace
    \multirow{2}{*}{Student}
      & AP
              & 30.7 & 30.8 & 32.1 & \bfseries 32.2 & 30.9
              & 29.8 & \bfseries 32.2 & 32.1
              & 31.9 & \bfseries 32.2 & 31.7 \\
      & AP$_{50}$
              & 54.9 & 55.2 & 56.2 & \bfseries 56.5 & 55.7
              & 55.3 & \bfseries 56.5 & 56.6
              & 56.0 & \bfseries 56.5 & 55.8 \\
    \bottomrule
  \end{tabular}
  }
\end{table}

\subsection{Ablation: Teacher Adaptation Variants}
\label{sec:ablation-teacher}

Different teacher adaptation strategies impact both teacher and student performance. Specifically, we compare fine-tuning only, self-training, and self-training combined with our proposed 
contrastive loss.

\begin{table}[ht]
  \centering
  \small
  \caption{\textbf{Teacher adaptation ablation.} Teacher and student AP under different adaptation strategies.}
  \label{tab:ablation_teacher}
  \setlength{\tabcolsep}{4pt}
  \begin{tabular}{lccc}
    \toprule
    \textbf{Variant} & \makecell{\textbf{Teacher}\\\textbf{AP}} & \makecell{\textbf{Student}\\\textbf{AP}} & \makecell{\textbf{$\Delta$ vs.}\\\textbf{SOTA}} \\
    \midrule
    Fine-tuning only         & 28.7 & 32.0 & +1.2 \\
    Self-training            & 29.7 & 32.2 & +1.5 \\
    Self-training + contr.   & \bfseries 30.5 & \bfseries 33.9 & \bfseries +3.1 \\
    \bottomrule
  \end{tabular}
\end{table}

\subsection{Loss Variant: InfoNCE vs.\ Margin Hinge}
\label{sec:ablation-loss-variant}

Replacing our asymmetric InfoNCE (§3.2) with a margin-based hinge loss yields identical maskAP (32.2\%) and +0.6 maskAP$_{50}$, at the cost of $1.6$× longer training. This evaluates whether enforcing a fixed positive–negative margin can match or improve upon the performance of InfoNCE.

\begin{table}[!htpb]
  \centering
  \small
  \caption{\textbf{Loss Variant Ablation.} Default InfoNCE vs.\ margin-based hinge (margin = 0.2).}
  \label{tab:ablation_loss_variant}
  \begin{tabular}{lcc}
    \toprule
    Loss Variant                     & maskAP (\%) & maskAP$_{50}$ (\%) \\
    \midrule
    Asymmetric InfoNCE (§3.2)        & 32.2        & 56.5           \\
    Margin hinge (m = 0.2)           & 32.2        & 57.1           \\
    \bottomrule
  \end{tabular}
\end{table}

\subsection{Ablation: Debias Score Formulation}
\label{sec:ablation-debias}
We evaluate three instantiations of the debias score function \(s^{\mathit{deb}}\) (§3.2):
\begin{itemize}
  \item \textbf{Original \(s^{\mathit{deb}}\):} fusion of mask and class confidences (ours).
  \item \(\bigl(s^{\mathit{deb}}\bigr)^2\): square each score to amplify the negatives with high confidence.
  \item \(\sqrt{s^{\mathit{deb}}}\): take the square root of each score to temper the bias.
\end{itemize}

% ---------- Top-down version ----------
\begin{table}[ht]
  \centering
  \small
  \caption{\textbf{Debias Score Formulation Ablation.}}
  \label{tab:ablation_debias}
  \begin{tabular}{lcc}
    \toprule
    \textbf{Score Variant} & \textbf{maskAP} & \textbf{maskAP$_{50}$} \\
    \midrule
    Original     & 32.2 & 56.5 \\
    Squared      & 32.0 & 56.3 \\
    Square‐root  & 31.9 & 56.2 \\
    \bottomrule
  \end{tabular}
\end{table}

\begin{table}[ht]
  \centering
  \small
  \caption{\textbf{Teacher Choice Ablation.}}
  \label{tab:ablation_teacher_choice}
  \begin{tabular}{lcc}
    \toprule
    \textbf{Model} & \textbf{AP} & \textbf{maskAP$_{50}$} \\
    \midrule
    Teacher T1 (0-shot) & 22.0 & 42.3 \\
    Teacher T2 (adapted) & 30.5 & 56.6 \\
    Student under T1 & 23.8 & 42.9 \\
    Student under T2 & \bfseries 32.2 & \bfseries 56.5 \\
    \bottomrule
  \end{tabular}
\end{table}

\subsection{Ablation: Negative Sampling Scope}
\label{sec:ablation-scope}
We evaluate two negative sampling scopes: (i) sampling only within the current mini batch vs.\ (ii) sampling from a small memory bank of past pixel embeddings (size 10k).

\begin{table}[htbp]
  \centering
  \small
  \caption{\textbf{Sampling scope ablation.} Mini-batch negatives vs.\ memory-bank negatives.}
  \label{tab:ablation_scope}
  \setlength{\tabcolsep}{4pt}
  \begin{tabular}{lcc}
    \toprule
    \textbf{Scope} & \makecell{\textbf{maskAP}\\\textbf{(\%)}} & \makecell{\textbf{maskAP$_{50}$}\\\textbf{(\%)}} \\
    \midrule
    Mini-batch only & 32.2 & 56.5 \\
    Memory bank (10k) & 32.7 & 57.3 \\
    \bottomrule
  \end{tabular}
\end{table}
Sampling from a memory bank of 10 k embeddings yields a modest performance gain (+0.5 maskAP, +0.8 maskAP$_{50}$) compared to in‐batch sampling. However, it incurs approximately $2.2$× longer training time due to the overhead of maintaining and querying the memory bank.

\subsection{Teacher Choice: Original vs.\ Adapted}
\label{sec:teacher-choice}

We compare distilling the student from the original VFM teacher (T1, zero-shot) versus our adapted teacher (T2). 
As shown in Table~\ref{tab:ablation_teacher_choice}, using the adapted teacher provides a much stronger signal, 
yielding a +8.4 AP improvement over the student distilled under T1. 

\begin{figure}
    \centering
\includegraphics[width=0.7\linewidth]{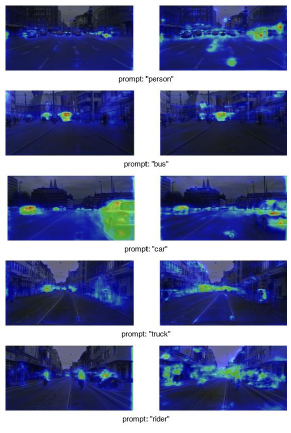}
   \caption{
Attention maps of the teacher model before and after adaptation.
Left: zero-shot VFM teacher before adaptation.
Right: teacher after adaptation with self-training and contrastive supervision.
The adapted teacher exhibits more localized attention on target objects
(person, bus, car, truck, rider) and reduced background activation,
indicating improved spatial discrimination that leads to higher-quality
pseudo-labels for student distillation.
}
    \label{fig:attn}
\end{figure}

\subsection{Extended Backbone Comparison}
\label{sec:backbone-comparison}

We compare the proposed DINOv2-S student against Guided Distillation baselines trained with different teacher backbones, including ResNet-50, DINOv2-B, and DINOv2-L.
\begin{table}[htpb]
\centering
\small
\caption{\textbf{Extended backbone comparison.} Proposed DINOv2-S student vs.\ Guided Distillation.}
\label{tab:backbone_comparison}
\begin{tabular}{lcccc}
\toprule
\textbf{Label} &
\textbf{Ours} &
\makecell{Guided Dist.\\ResNet-50} &
\makecell{Guided Dist.\\DINOv2-B} &
\makecell{Guided Dist.\\DINOv2-L} \\
\midrule
5\%  & \textbf{30.7} & 23.9 & 25.1 & 28.8 \\
10\% & \textbf{33.9} & 30.8 & 27.0 & 33.0 \\
30\% & \textbf{40.4} & 35.6 & 35.4 & 39.1 \\
\bottomrule
\end{tabular}
\end{table}

\section{Use of LLM Statement}
We leverage ChatGPT to polish the paper presentation at the sentence level.
Specifically, we provided the LLM some of the draft sentences, and asked the LLM if there is a better
version of the given sentence

%\section{More Qualitative Results}
%\label{sec:qual-ann}

\begin{figure*}
    \centering
    \includegraphics[width=0.9\linewidth]{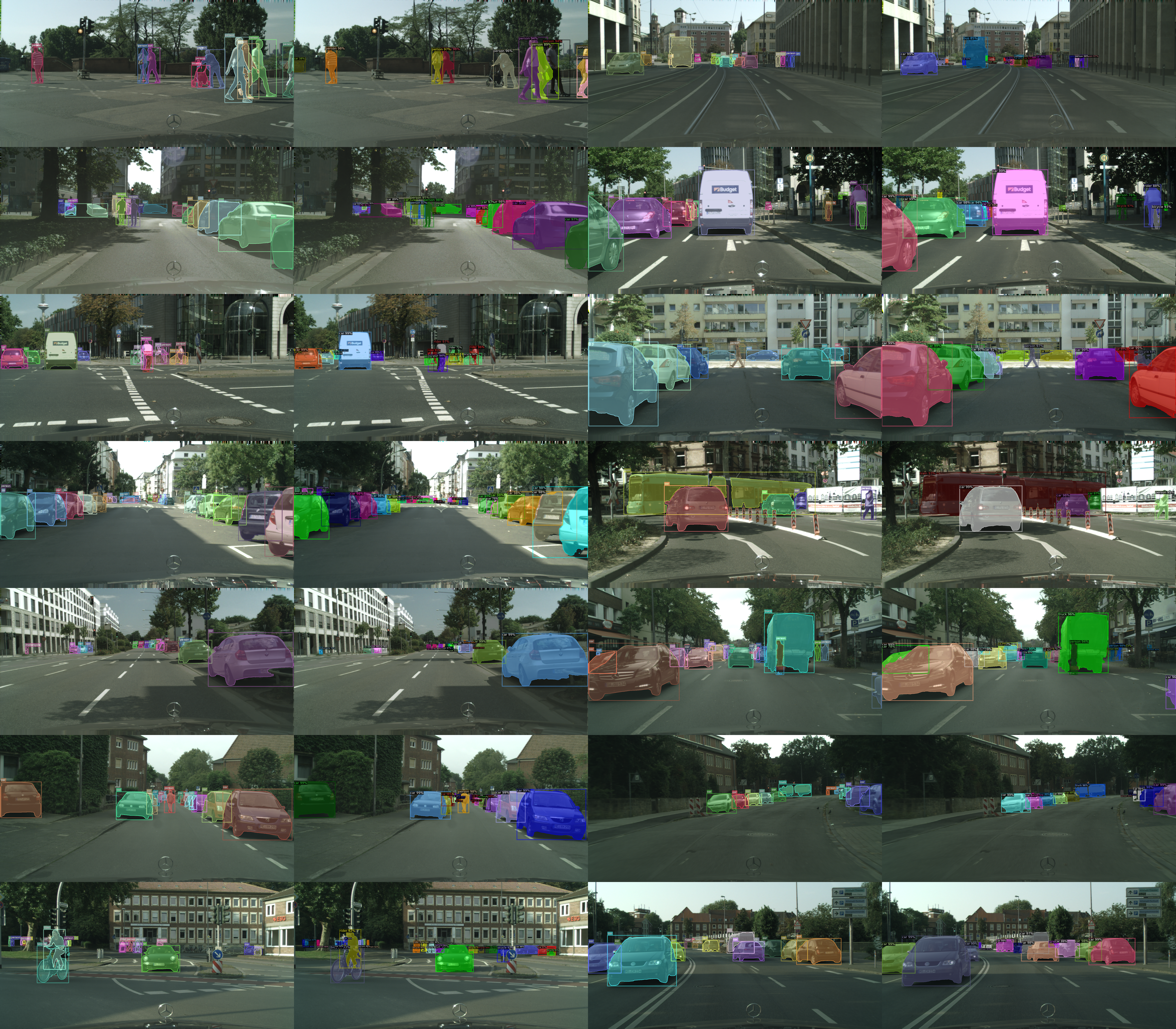}
    \caption{More Qualitative Results}
    \label{fig:more_qu}
\end{figure*}